\documentclass[letterpaper]{article} 
\usepackage[preprint]{aaai2027}  
\usepackage[hyphens]{url}  
\usepackage{graphicx} 
\usepackage{booktabs}
\usepackage{multirow}
\usepackage{array}
\usepackage{dblfloatfix}
\usepackage{amsmath}
\usepackage{amssymb}
\usepackage{amsthm}
\usepackage{natbib}  
\usepackage{caption} 
\usepackage{algorithm}
\usepackage{algorithmic}

\usepackage{newfloat}
\usepackage{listings}
\DeclareCaptionStyle{ruled}{labelfont=normalfont,labelsep=colon,strut=off} 
\floatstyle{ruled}
\newfloat{listing}{tb}{lst}{}
\floatname{listing}{Listing}

\usepackage{booktabs}

\newtheorem{definition}{Definition}
\newtheorem{proposition}{Proposition}

\newtheorem{corollary}{Corollary}
\newcommand{\std}[1]{{\footnotesize$\pm$#1}} 
\title{Diverse and Plausible Algorithmic Recourse via Tractable Recourse Distributions}

\author{
    Anagha Sabu,
    Hrithik Suresh,
    Narayanan C. Krishnan
}

\affiliations{
    Mehta Family School of Data Science and Artificial Intelligence\\
    Indian Institute of Technology Palakkad\\
    Palakkad, Kerala, India\\
    \{142404001,142314001\}@smail.iitpkd.ac.in, ckn@iitpkd.ac.in
}

\begin{document}

\maketitle

\begin{abstract}

Algorithmic recourse seeks to help individuals reverse unfavorable automated decisions by recommending actionable changes that achieve a desired outcome. As an individual usually has several distinct routes to a favorable decision, and different people can act on different ones, a recourse system should offer multiple realistic alternatives rather than one. Existing approaches formulate recourse as an optimization problem that constructs one or a small set of counterfactuals rather than modeling the underlying space of feasible solutions, and in practice each sacrifices diversity, plausibility, or feasibility to secure the others. We propose Tractable Recourse Distributions, a probabilistic framework that represents the space of feasible alternatives for a given factual instance as a probability distribution over favorable outcomes. For commonly used cost functions based on proximity and the number of feature changes, we show that this distribution admits an exact representation as a probabilistic circuit, obtained by exponentially tilting the circuit; each individual's distribution is therefore available in closed form, without retraining the model. Sampling from these distributions naturally produces diverse and plausible recourses, while the tilting parameters provide explicit control over their proximity and sparsity. Experiments on standard algorithmic recourse benchmark datasets demonstrate that the proposed framework attains diversity, plausibility, and feasibility simultaneously, while retaining sufficient probability mass over feasible counterfactuals for rejection sampling to be practical. A visual study on MNIST illustrates how the tilt strength trades proximity against validity.

\end{abstract}

\section{Introduction}
Automated decision-making systems increasingly influence consequential
decisions such as loan approval~\cite{https://doi.org/10.1111/jofi.13090},
hiring~\cite{Raghavan_2020}, insurance~\cite{Prince2019ProxyDI}, and college
admissions~\cite{10.1609/aimag.v35i1.2504}. When such a system produces an
unfavorable decision, providing an explanation alone is often
insufficient~\cite{Wachter2018CounterfactualGDPR,barocas2020hidden}; affected
individuals also need actionable guidance on how the decision can be reversed.
Algorithmic recourse (AR) addresses this problem by identifying changes to an
individual's attributes that would lead to a desired
prediction~\cite{Wachter2018CounterfactualGDPR,Ustun2019ActionableRecourse,Karimi2022AlgorithmicRecourse}.
In practice, however, there is rarely a unique path to a favorable
(positive-class) outcome. Different interventions may suit different individuals depending on their circumstances, constraints and priorities ~\cite{Ustun2019ActionableRecourse,barocas2020hidden}.
An effective recourse system should therefore provide not merely a single recommendation, but a diverse set of realistic alternatives ~\cite{Mothilal2020DiCE}.

Existing AR approaches typically formulate recourse as an optimization problem that constructs one or a small set of counterfactuals satisfying objectives including validity, proximity, sparsity, and
plausibility~\cite{Wachter2018CounterfactualGDPR, Ustun2019ActionableRecourse,ijcai2020p395, Poyiadzi_2020,Guidotti2022Counterfactual}. Methods that generate multiple recommendations generally introduce an explicitly diversity-promotion objective or enumerate several solutions to the same optimization problem
objectives~\cite{Mothilal2020DiCE, nemecek2025generatinglikelycounterfactualsusing}.
Although this produces multiple counterfactuals, it continues to treat recourses as isolated points rather than modeling the underlying space of alternatives available to an individual.  As a result, the number of recourses must usually be specified in advance, their relative likelihood is not represented, and diversity and plausibility are handled as separate--and potentially competing--objectives.

We instead reformulate AR as the construction of an individualized probability distribution over favorable alternatives for a factual instance. We call this object a Tractable Recourse Distribution (TRD). Starting from the positive-class distribution, TRD uses exponential tilting to favor outcomes that are proximal and sparse relative to the factual instance. When the positive-class distribution is represented by a probabilistic circuit (PC) ~\cite{Choi2020ProbabilisticCircuits,Poon2011SPN,composit2021}, this transformation can be exactly realized while retaining tractable normalization, conditioning, likelihood evaluation, and sampling. Diverse recourses can therefore be generated through sampling rather than by repeatedly solving an optimization problem, with actionability enforced through conditioning and validity and causal consistency imposed during generation. Overall, we make the
following contributions:
\begin{itemize}
    \item We introduce Tractable Recourse Distributions (TRD), a distributional formulation of AR that represents the favorable alternatives to a factual instance as an individualized probability distribution, rather than constructing a fixed set of optimized feasible counterfactuals. 
    \item We show that smooth and decomposable PCs are closed under exponential tilting by additively separable recourse costs. This yields an exact construction of TRDs for commonly used proximity and sparsity objectives while preserving normalization, likelihood evaluation, conditioning, and sampling without retraining or per-individual optimization. 
    \item We characterize TRDs theoretically by showing that exponential tilting concentrates probability mass towards lower-cost interventions, including proximal and sparse counterfactuals.
    \item We demonstrate on three standard tabular AR benchmarks and MNIST that TRD generates diverse and plausible recourse sets and support joint control over proximity and sparsity. Despite enforcing classifier validity and causal consistency through rejection, TRDs retain sufficient feasible probability mass to provide recourse to every evaluated factual within a fixed sampling budget. 
\end{itemize}

\section{Related Works}
Most AR methods construct counterfactuals by searching for individual instances that attain the desired prediction while satisfying objectives such as proximity, sparsity, actionability, and plausibility ~\cite{Karimi2022AlgorithmicRecourse,Guidotti2022Counterfactual}. Gradient-based methods optimize differentiable objectives over the input space~\cite{Wachter2018CounterfactualGDPR,Mothilal2020DiCE}, whereas mixed-integer and constraint-based methods encode validity and actionability as constraints explicitly ~\cite{Ustun2019ActionableRecourse,ijcai2020p395}. Other approaches incorporate causal models ~\cite{Karimi2021CausalRecourse,Kugelgen2022FairCausalRecourse}, search over density-weighted paths ~\cite{Poyiadzi_2020} or learned data manifolds ~\cite{Pawelczyk_2020,Panagiotou_2024} or directly optimize likelihood under a tractable probabilistic model. In particular PAR performs approximate MAP inference over a Sum-Product Network to obtain plausible actionable counterfactuals \cite{sabu2026plausibility}. Despite their different mechanisms, these methods formulate recourse primarily as the construction of one counterfactual-or a fixed collection of counterfactual points-rather than an individualized distribution over possible recourses. 

A smaller body of work explicitly generates multiple recourses.  DiCE jointly optimizes a predetermined number of counterfactuals using a diversity-promoting objective ~\cite{Mothilal2020DiCE}; while solver-based methods, like LiCE~\cite{nemecek2025generatinglikelycounterfactualsusing}, obtain multiple solutions through solution pools. LiCE is especially related to our work: it incorporates likelihood under a Sum-Product Network into a mixed-integer optimization problem and enumerates near-optimal solutions, whereas its MIO variant omits the likelihood term. These approaches nevertheless construct a finite set whose size is specified in advance, with diversity imposed through an explicit objective, constraints, or repeated optimization. They therefore do not represent the relative probability or broader structure of the recourse strategies available to an individual. In contrast, TRD models an individualized distribution over favorable alternatives, from which diverse recourses are generated through sampling while plausibility is inherited from the learned positive-class distribution.

\section{Preliminaries}
\label{sec:prelims}
\subsection{Notation}
\label{sec:notation}

Let $\mathcal{X}=\mathcal{X}_1\times\cdots\times\mathcal{X}_D$
denote the input space, and let $\mathbf{x}=(x_1,\ldots,x_D)\in\mathcal{X}$ denote an instance,
where $x_j$ is its $j$-th feature. We consider a binary
classification setting with an unfavorable class $0$ and a
favorable class $1$. Let $s:\mathcal{X}\rightarrow[0,1]$
denote the classifier's favorable-class score, and let
$f(\mathbf{x})=\mathbb{I}[s(\mathbf{x})\geq\tau]$ denote its
prediction under decision threshold $\tau$. The distribution $p^{+}$ models favorable outcomes and serves as the base distribution from which individualized tractable recourse distributions are constructed.

\subsection{Algorithmic Recourse}
\label{sec:recourse}
We follow the terminology established in the AR
literature~\cite{Wachter2018CounterfactualGDPR,Karimi2022AlgorithmicRecourse}. Let $\mathbf{x}^{-}\in\mathcal{X}$ denote a factual instance receiving an unfavorable prediction, i.e., $f(\mathbf{x}^{-})=0$. A \emph{counterfactual} for $\mathbf{x}^{-}$ is an instance $\mathbf{x}^{+}\in\mathcal{X}$ that receives the desired prediction,
\( f(\mathbf{x}^{+})=1. \)

A counterfactual is \emph{feasible} if it satisfies the actionability and causal constraints associated with the factual instance. Actionability requires immutable attributes to remain unchanged and mutable attributes subject to directional constraints to change only in the prescribed direction. Let $\mathcal{I}\subseteq[D]$ denote the immutable features and let $\mathcal{M}\subseteq[D]\setminus\mathcal{I}$ denote the features subject to monotonicity constraints. We denote by $\mathcal{A}(\mathbf{x}^{-})$ the corresponding actionable set, which requires
\( x^{+}_j=x^{-}_j, \forall j\in\mathcal{I}, \) together with the prescribed directional constraints for $j\in\mathcal{M}$. Let $\mathcal{C}(\mathbf{x}^{-})$ denote the set of instances satisfying the known causal constraints. A counterfactual $\mathbf{x}^{+}$ is therefore feasible if \( \mathbf{x}^{+}\in
\mathcal{A}(\mathbf{x}^{-})\cap\mathcal{C}(\mathbf{x}^{-}).
\) 

A \emph{recourse} is a feasible counterfactual that is also proximal and sparse with respect to the factual instance. Proximity measures the magnitude of the recommended changes, whereas sparsity measures the number of features that must be modified. In this work, these properties are quantified by $\|\mathbf{x}^{+}-\mathbf{x}^{-}\|_1$ and
$\|\mathbf{x}^{+}-\mathbf{x}^{-}\|_0$, respectively, and are promoted through the recourse distribution rather than imposed as hard feasibility constraints.

Plausibility is an additional desirable property of a recourse:
it measures whether the recommended instance is realistic under
the distribution of favourable outcomes. It is not included in
the definition of feasibility or recourse.


\subsection{Probabilistic Circuits}
\label{sec:pcs}

A probabilistic circuit (PC) is a parameterized directed acyclic graph that represents a probability distribution over $\mathcal{X}$. It consists of input, product, and sum nodes. Each node $n$ has a scope $\phi(n)\subseteq[D]$, corresponding to the features represented by the subcircuit rooted at $n$, and computes a function $p_n(\mathbf{x}_{\phi(n)})$ recursively as
\[
p_n(\mathbf{x}_{\phi(n)})=
\begin{cases}
f_n(x_j),
    & \text{if $n$ is an input node},\\[2mm]
\displaystyle\prod_{c\in\operatorname{ch}(n)}
p_c(\mathbf{x}_{\phi(c)}),
    & \text{if $n$ is a product node},\\[2mm]
\displaystyle\sum_{c\in\operatorname{ch}(n)}
\theta_{n,c}\,
p_c(\mathbf{x}_{\phi(c)}),
    & \text{if $n$ is a sum node},
\end{cases}
\]
where $\operatorname{ch}(n)$ denotes the children of $n$. The weights of every sum node satisfy $\theta_{n,c}\geq 0$ and $\sum_{c\in\operatorname{ch}(n)}\theta_{n,c}=1$. The distribution represented by the PC is computed at its root.

We consider smooth and decomposable PCs. Smoothness requires the children of every sum node to have identical scopes, whereas decomposability requires the children of every product node to have pairwise disjoint scopes. Assuming normalized input distributions, these properties ensure that every subcircuit represents a normalized distribution over its scope and enable exact likelihood evaluation, marginalization, conditioning, and sampling. We use a PC to represent the positive-class distribution $p^{+}$.

\subsection{Exponential Tilting} 
Let $p$ be a probability distribution over $\mathcal{X}$ and let $G:\mathcal{X}\rightarrow\mathbb{R}$ be a measurable cost function such that \( 0<Z_G  = \int_{\mathcal{X}}p(\mathbf{x}) \exp\{-G(\mathbf{x})\}\,d\mathbf{x} <\infty. \) The exponential tilt of $p$ by $G$ is the distribution~\cite{siegmund1976importance,li2023tilted}
\[
\widetilde{p}_G(\mathbf{x})
=
\frac{
p(\mathbf{x})\exp\{-G(\mathbf{x})\}
}{
Z_G
}.
\]
Exponential tilting reweights the base distribution toward regions of lower cost while preserving its support whenever $G$ is finite. The strength of the transformation is controlled through the parameters appearing in $G$: increasing their values places progressively greater probability mass on lower-cost instances. In our setting, $G$ is factual-specific and combines separately weighted proximity and sparsity costs.

\section{Methodology}
\label{sec:methodology}

\subsection{Tractable Recourse Distributions}
Let $\mathbf{x}^{-}$ be a factual instance receiving an unfavorable prediction. Rather than directly constructing a finite set of counterfactuals, we seek to define an individualized probability distribution over favorable outcomes for $\mathbf{x}^{-}$. Such a distribution should retain the plausibility encoded by the positive-class distribution while assigning greater probability to outcomes that are proximal and sparse relative to the factual instance. Feasibility is handled subsequently: actionability constraints are imposed through conditioning, whereas causal constraints are enforced during recourse generation through rejection sampling.

The positive-class distribution $p^{+}$ already represents the  distribution of plausible favorable outcomes. We therefore personalize $p^{+}$ to the factual instance rather than learning a separate model for each individual from scratch. Specifically, we define the factual-specific intervention cost
\[
G_{\delta,\nu}(\mathbf{x};\mathbf{x}^{-})
=
\delta\|\mathbf{x}-\mathbf{x}^{-}\|_{1}
+
\nu\|\mathbf{x}-\mathbf{x}^{-}\|_{0},
\]
where $\delta,\nu\geq 0$ control the emphasis on proximity and sparsity, respectively.

\begin{definition}[Tractable Recourse Distribution]
Given a factual instance $\mathbf{x}^{-}$, its tractable recourse distribution is the exponentially tilted positive-class distribution
\[
\widetilde{p}_{\delta,\nu}
(\mathbf{x}\mid\mathbf{x}^{-})
=
\frac{
p^{+}(\mathbf{x})
\exp\!\left\{
-G_{\delta,\nu}(\mathbf{x};\mathbf{x}^{-})
\right\}
}{
Z(\delta,\nu;\mathbf{x}^{-})
},
\]
where
\(
Z(\delta,\nu;\mathbf{x}^{-})
=
\int_{\mathcal{X}}
p^{+}(\mathbf{x})
\exp\!\left\{
-G_{\delta,\nu}(\mathbf{x};\mathbf{x}^{-})
\right\}
d\mathbf{x}
\)
is the normalization constant.
\label{def:trd}
\end{definition}

The base distribution promotes plausibility, while exponential tilting concentrates probability mass on favorable outcomes requiring smaller and fewer changes from the factual instance. The parameters $\delta$ and $\nu$ therefore control the relative emphasis on proximity and sparsity. The distribution defined above is subsequently conditioned on the actionable set and filtered for causal consistency to obtain feasible counterfactuals from which recourses are selected.

\paragraph{Deployment setting.}
We consider the setting in which individual training examples are unavailable at recourse-generation time. Once the classifier and the positive-class distribution $p^{+}$ have been learned, constructing a factual-specific recourse distribution requires access only to the trained classifier, the learned PC representing $p^{+}$, and the factual instance $\mathbf{x}^{-}$. There are no training examples available during deployment.

\subsection{Exact Construction using Probabilistic Circuits}

Definition~\ref{def:trd} provides a principled formulation of individualized recourse distributions. However, its exact normalization and sampling are generally intractable for most unrestricted deep generative models. Its exact realization using a PC follows from the additive structure of the intervention cost. 



\begin{definition}[Additive Separability]
\label{def:additive_separability}
A factual-specific cost $G:\mathcal{X}\times\mathcal{X}\rightarrow\mathbb{R}$ is additively separable if
\(
G(\mathbf{x};\mathbf{x}^{-})
=
\sum_{j=1}^{D} g_j(x_j;x_j^{-}),
\)
where each $g_j$ depends only on feature $j$.
\end{definition}

The proximity and sparsity cost defining TRD is additively separable, since \( g_j(x_j;x_j^{-}) = \delta |x_j-x_j^{-}| + \nu \mathbb{I}[x_j\neq x_j^{-}].\) Thus,
\(
\exp\{-G(\mathbf{x};\mathbf{x}^{-})\}
=
\prod_{j=1}^{D}
\exp\{-g_j(x_j;x_j^{-})\},
\)
which aligns with the recursive structure of a smooth and
decomposable PC.

For a node $n$, define the cost restricted to its scope as
\(
G_{\phi(n)}(\mathbf{x}_{\phi(n)};\mathbf{x}^{-}_{\phi(n)})
=
\sum_{j\in\phi(n)}g_j(x_j;x_j^{-}).
\)

\begin{proposition}[Exact realization of TRD]
Let $p^{+}$ be represented by a smooth and decomposable PC, and let $G$ be additively separable.
Suppose that, for every input node, its exponentially tilted
distribution and corresponding normalizer can be computed
exactly. Then the TRD admits an exact realization as a smooth
and decomposable PC with the same topology as $p^{+}$.

For every node $n$, the transformed subcircuit
computes
\[
\widetilde{p}_n(\mathbf{x}_{\phi(n)})
=
\frac{
p_n(\mathbf{x}_{\phi(n)})
\exp\{-G_{\phi(n)}(\mathbf{x}_{\phi(n)};
                  \mathbf{x}^{-}_{\phi(n)})\}
}{
Z_n
},
\]
where
\(
Z_n
=
\int
p_n(\mathbf{x}_{\phi(n)})
\exp\{-G_{\phi(n)}(\mathbf{x}_{\phi(n)};
                  \mathbf{x}^{-}_{\phi(n)})\}
\,d\mathbf{x}_{\phi(n)}.
\)
The transformed circuit is constructed recursively as follows.

For an input node $n$ with $\phi(n)=\{j\}$,
\[
\widetilde{p}_n(x_j)
=
\frac{
p_n(x_j)\exp\{-g_j(x_j;x_j^{-})\}
}{
Z_n
}.
\]

For a product node $n$,
\[
Z_n=\prod_{c\in\operatorname{ch}(n)}Z_c,
\qquad 
\widetilde{p}_n
=
\prod_{c\in\operatorname{ch}(n)}\widetilde{p}_c.
\]

For a sum node $n$, define
\[
Z_n
=
\sum_{c\in\operatorname{ch}(n)}
\theta_{n,c}Z_c,
\qquad
\widetilde{\theta}_{n,c}
=
\frac{\theta_{n,c}Z_c}{Z_n}.
\]
Then
\(
\widetilde{p}_n
=
\sum_{c\in\operatorname{ch}(n)}
\widetilde{\theta}_{n,c}\widetilde{p}_c.
\)
At the root $r$, $\widetilde{p}_r$ is exactly the tractable
recourse distribution
$\widetilde{p}_{\delta,\nu}
(\mathbf{x}\mid\mathbf{x}^{-})$.
\end{proposition}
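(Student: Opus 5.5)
We argue by structural induction on the circuit, processing nodes in topological order from the inputs to the root. The claim to establish at every node $n$ has three parts. First, $Z_n$ as defined by the integral satisfies the stated recursion. Second, the recursively constructed $\widetilde{p}_n$ equals $p_n\exp\{-G_{\phi(n)}\}/Z_n$ pointwise. Third, $\widetilde{p}_n$ is a normalized distribution over $\mathbf{x}_{\phi(n)}$. Throughout, additive separability is used in the form $\exp\{-G_{S\cup T}\}=\exp\{-G_S\}\exp\{-G_T\}$ for disjoint scopes $S,T$. We also assume $0<Z_n<\infty$, which holds at input nodes by hypothesis and then propagates upward.

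\textbf{Input nodes.} This case is the hypothesis itself: the tilted input distribution and its normalizer $Z_n$ are assumed computable exactly. By construction, $\widetilde{p}_n$ integrates to one.

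\textbf{Product nodes.} By decomposability, the scopes $\phi(c)$ for $c\in\operatorname{ch}(n)$ are pairwise disjoint and their union is $\phi(n)$. Hence $G_{\phi(n)}=\sum_c G_{\phi(c)}$, and therefore
\[
p_n\exp\{-G_{\phi(n)}\}=\prod_{c} p_c\exp\{-G_{\phi(c)}\}.
\]
Since the variables separate, Fubini--Tonelli (applicable because the integrands are nonnegative) factorizes the integral, giving $Z_n=\prod_c Z_c$. Dividing the display by $Z_n=\prod_c Z_c$ and applying the induction hypothesis at each child yields $\widetilde{p}_n=\prod_c\widetilde{p}_c$. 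Normalization of $\widetilde{p}_n$ follows from the same factorization.

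\textbf{Sum nodes.} By smoothness, every child has scope $\phi(n)$, so the same restricted cost $G_{\phi(n)}$ multiplies every term:
\[
p_n\exp\{-G_{\phi(n)}\}=\sum_c\theta_{n,c}\,p_c\exp\{-G_{\phi(n)}\}=\sum_c\theta_{n,c}Z_c\,\widetilde{p}_c .
\]
Integrating, and using linearity together with the normalization of each $\widetilde{p}_c$, gives $Z_n=\sum_c\theta_{n,c}Z_c$. Dividing by $Z_n$ then gives $\widetilde{p}_n=\sum_c\widetilde{\theta}_{n,c}\widetilde{p}_c$. The new weights $\widetilde{\theta}_{n,c}=\theta_{n,c}Z_c/Z_n$ are nonnegative and sum to one, so this is a valid sum node.

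The transformed circuit reuses every node and edge of the original and changes only the input distributions and the sum weights. It therefore has the same topology, and the scopes are unchanged, so smoothness and decomposability are preserved. At the root, $\phi(r)=[D]$ and $p_r=p^{+}$, so $\widetilde{p}_r=p^{+}\exp\{-G\}/Z_r$ with $Z_r=Z(\delta,\nu;\mathbf{x}^{-})$, which is exactly Definition~\ref{def:trd}.

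\textbf{Main obstacle.} The argument is largely bookkeeping. The point needing the most care is the measure-theoretic justification, for two reasons. First, with mixed discrete and continuous features the integral must be read with respect to the appropriate product of counting and Lebesgue measures, so that the Fubini factorization at product nodes is valid. Second, we must confirm that $0<Z_n<\infty$ at every node, so that the divisions are legitimate. Finiteness is immediate for the cost $G_{\delta,\nu}$ with $\delta,\nu\ge 0$, since $\exp\{-G\}\le 1$ gives $Z_n\le 1$. Positivity follows inductively from positive input normalizers, since products and positively weighted sums of positive numbers are positive. A zero-weight child would only contribute zero and causes no difficulty.
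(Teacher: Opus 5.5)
Your proof is correct and follows essentially the same structural induction as the paper: the same three cases, with decomposability and additive separability factorizing the tilt and the integral at product nodes, smoothness letting the common tilt factor pass through sum nodes, and the TRD read off at the root. Your extra care (Fubini--Tonelli over mixed counting/Lebesgue measure, and propagating $0<Z_n<\infty$ upward from the inputs) slightly sharpens the paper's argument, which only notes $0<Z_n\le 1$ at input nodes from $g_j\ge 0$.
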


\begin{proof}
    See Supplementary Material Section~A.1.
\end{proof}
\begin{corollary}[Closure under Additive Exponential Tilting]
\label{cor:closure}
Subject to tractable closure of the input distributions, smooth and decomposable PCs are closed under exponential tilting by additively separable costs. The resulting circuit retains exact likelihood evaluation, marginalization, conditioning, and sampling, without retraining the base model.
\end{corollary}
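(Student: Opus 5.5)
The corollary follows from the exact realization proposition. The plan is to show that the object produced by that construction is again a smooth and decomposable PC with normalized inputs, so every standard tractability guarantee for such circuits applies verbatim.

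First, given a smooth and decomposable PC for a base distribution $p$ and an additively separable cost $G$, I apply the proposition. It yields a circuit $\widetilde{p}$ with the same DAG and the same scopes $\phi(n)$ at every node. Smoothness and decomposability are purely scope conditions, so they are inherited unchanged.

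Next I check that the new parameters are valid. At each sum node the weights $\widetilde{\theta}_{n,c}=\theta_{n,c}Z_c/Z_n$ are nonnegative and sum to one, since $\sum_c \theta_{n,c}Z_c = Z_n$. Here I need $0<Z_c<\infty$ for all nodes. I would argue this bottom-up:
\begin{itemize}
\item at inputs, it is exactly the tractable-closure hypothesis, understood as including finiteness and positivity of each input normalizer;
\item at product nodes, the normalizer is a product of finite positive numbers;
\item at sum nodes, it is a convex-weighted sum of finite positive numbers with at least one positive weight.
\end{itemize}
The inputs $\widetilde{p}_n$ are normalized tilted input distributions by assumption. Hence $\widetilde{p}$ is a smooth, decomposable, normalized PC, and by the proposition its root equals the tilt $pe^{-G}/Z_G$.

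Then I invoke the standard facts recalled in the preliminaries on probabilistic circuits:
\begin{itemize}
\item exact likelihood evaluation is a single bottom-up pass;
\item marginalization replaces marginalized inputs by their integral, which is $1$ for normalized inputs, and this is valid by decomposability and smoothness;
\item conditioning follows as a ratio of two marginals, or equivalently by reweighting sum nodes top-down;
\item ancestral sampling chooses children at sum nodes, expands all children at product nodes, and samples at inputs.
\end{itemize}
These require only smoothness, decomposability, normalized tractable inputs, and, for sampling, the ability to sample the tilted inputs, which I fold into the tractable-closure hypothesis. The construction touches only the input parameters and the sum weights, computed in one bottom-up pass given $\mathbf{x}^{-}$, and involves no data or learning. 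That justifies the claim that no retraining is needed.

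The main subtlety is making ``tractable closure of the input distributions'' precise. It should mean that for each input family the tilt by $e^{-g_j}$ has a computable finite positive normalizer and supports the required operations: density evaluation, integration over sets for marginals and conditioning, and sampling. For the concrete TRD cost I would note that this holds for categorical inputs trivially, and for Gaussian or piecewise inputs tilted by $\delta|x-x^-|$ through a two-piece truncated-exponential decomposition. The indicator term $\nu\,\mathbb{I}[x\ne x^-]$ only rescales a point mass, or is a measure-zero change for continuous inputs. These remarks are illustrative; the corollary itself only needs the abstract hypothesis.
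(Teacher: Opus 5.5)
Your proposal is correct and takes essentially the paper's route: the corollary is read off from the exact-realization proposition, whose structural induction gives a circuit with the same topology and scopes (so smoothness and decomposability carry over), renormalized sum weights $\widetilde{\theta}_{n,c}=\theta_{n,c}Z_c/Z_n$, and normalized tilted inputs, after which the standard tractable operations of smooth decomposable PCs apply unchanged. The only minor difference is that you assume $0<Z_n<\infty$ at the inputs as part of the hypothesis, whereas the paper derives $0<Z_n\le 1$ from $g_j\ge 0$; your version is the safer one if the separable cost may take negative values, which the definition of additive separability formally allows.
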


For the input distributions (categorical and histogram) popularly used in the PC literature \cite{arya2024neuralnetworkapproximatorsmarginal}, the tilted probabilities and leaf normalizers are obtained exactly by element-wise reweighting over their finite support.

\subsection{Theoretical Property}
We now characterize how exponential tilting concentrates probability mass toward lower-cost interventions. 
\begin{proposition}[Concentration of Intervention Cost]
Let
$\widetilde{p}(\mathbf{x}\mid\mathbf{x}^{-})
=p^{+}(\mathbf{x})
\exp\{-G(\mathbf{x};\mathbf{x}^{-})\}/Z_G(\mathbf{x}^{-})
$
where $G(\mathbf{x};\mathbf{x}^{-})\geq 0$. Then, for every
$\epsilon>0$,
\[
\Pr_{\mathbf{x}\sim\widetilde{p}}
\left[
G(\mathbf{x};\mathbf{x}^{-})>\epsilon
\right]
\leq
\frac{\exp(-\epsilon)}
{Z_G(\mathbf{x}^{-})}.
\]
\label{prop:concentration}
\end{proposition}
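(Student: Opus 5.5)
The plan is a direct Chernoff/Markov-type argument: write the probability as an integral of the tilted density over the event, bound the exponential factor uniformly on that event, and control what remains using that $p^{+}$ is a normalized density. Let $E_\epsilon=\{\mathbf{x}\in\mathcal{X}: G(\mathbf{x};\mathbf{x}^{-})>\epsilon\}$, which is measurable because $G$ is. By the definition of $\widetilde{p}$,
\[
\Pr_{\mathbf{x}\sim\widetilde{p}}\left[G(\mathbf{x};\mathbf{x}^{-})>\epsilon\right]
=\frac{1}{Z_G(\mathbf{x}^{-})}\int_{E_\epsilon} p^{+}(\mathbf{x})\exp\{-G(\mathbf{x};\mathbf{x}^{-})\}\,d\mathbf{x}.
\]

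On $E_\epsilon$ we have $\exp\{-G(\mathbf{x};\mathbf{x}^{-})\}<\exp(-\epsilon)$, and $p^{+}\ge 0$. The integral is therefore at most $\exp(-\epsilon)\int_{E_\epsilon}p^{+}(\mathbf{x})\,d\mathbf{x}=\exp(-\epsilon)\Pr_{p^{+}}[E_\epsilon]$. Bounding $\Pr_{p^{+}}[E_\epsilon]\le 1$ gives the stated inequality. I will also note the slightly sharper intermediate form $\exp(-\epsilon)\Pr_{p^{+}}[G>\epsilon]/Z_G$.

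For well-posedness, the hypothesis $G\ge 0$ gives $Z_G\le\int p^{+}=1<\infty$, and $Z_G>0$ is part of the standing assumption in the definition of exponential tilting. Hence $\widetilde{p}$ is a valid distribution and the division is legitimate. For discrete features (categorical or histogram leaves), the integrals are read as sums or integrals against the appropriate base measure; the argument is unchanged.

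No step is a genuine obstacle. The only point needing care is the interpretation: the bound is informative only when $\exp(-\epsilon)<Z_G(\mathbf{x}^{-})$. Since $Z_G\ge \exp(-\epsilon_0)\Pr_{p^{+}}[G\le\epsilon_0]$ for any $\epsilon_0$, the tail decays exponentially in $\epsilon$ relative to the mass $p^{+}$ places on low-cost interventions. I would state this as a remark rather than as part of the proof.
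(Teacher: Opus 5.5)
Your argument is correct and is essentially the paper's proof: the paper applies Markov's inequality to $e^{G}$ under $\widetilde{p}$ and uses $\mathbb{E}_{\widetilde{p}}[e^{G}]=1/Z_G(\mathbf{x}^{-})$, which is your integral bound written in expectation form. Your direct version also keeps the intermediate factor $\Pr_{p^{+}}[G>\epsilon]$, which the paper's route discards by taking the expectation over all of $\mathcal{X}$.
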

\begin{proof}
    See Supplementary Material Section~A.2.
\end{proof}
\noindent
For the TRD cost \( G_{\delta,\nu}(\mathbf{x};\mathbf{x}^{-}) \), Proposition \ref{prop:concentration} implies exponential suppression of counterfactuals that are distant from the factual or modify many features. Thus, $\delta$ and $\nu$ control the emphasis placed on proximity and sparsity, respectively.

\subsection{Feasibility Enforcement}

Proximity and sparsity are incorporated into TRD as soft
preferences. Actionability, in contrast, imposes hard restrictions on the permissible changes: immutable features must remain fixed, and features subject to monotonicity constraints may change only in the prescribed direction.

Given the actionable set $\mathcal{A}(\mathbf{x}^{-})\subseteq\mathcal{X}$, we obtain the actionable TRD by conditioning:
\[
\widetilde{p}_{\mathcal A}
(\mathbf{x}\mid\mathbf{x}^{-})
=
\frac{
\widetilde{p}_{\delta,\nu}
(\mathbf{x}\mid\mathbf{x}^{-})
\mathbb{I}
[\mathbf{x}\in\mathcal{A}(\mathbf{x}^{-})]
}{
\Pr_{\widetilde{p}_{\delta,\nu}}
\!\left(
\mathbf{X}\in\mathcal{A}(\mathbf{x}^{-})
\mid\mathbf{x}^{-}
\right)
}.
\]
Conditioning removes probability mass from inadmissible
interventions and preserves the relative probabilities assigned
by TRD to instances within the actionable set.

The actionability constraints $\mathcal{A}(\mathbf{x}^{-})$ decomposes into feature-wise restrictions. Immutability is enforced by restricting the corresponding input distribution to $x_j=x_j^{-}$, while monotonicity is enforced by restricting its support to values satisfying the prescribed direction of change. Because PCs support exact conditioning on such input-level events, the actionable TRD is represented exactly as a PC and retains tractable likelihood evaluation and sampling.

Actionability alone does not ensure that a sampled instance is a counterfactual or that it satisfies the causal constraints. Classifier validity depends on the external predictive model, whereas causal constraints may involve multiple features and cannot generally be encoded as independent input-node restrictions. We therefore sample from the actionable TRD and reject instances that do not receive the desired prediction or violate the specified causal constraints. The accepted instances are feasible counterfactuals; their proximity and sparsity are promoted by the underlying tilt.

\subsection{Diverse Recourse Set Construction}
Unlike point-wise approaches that construct one or a predetermined number of recourses, the TRD represents a distribution from which an arbitrary number of candidate interventions can be sampled. Diversity therefore, does not require an additional diversity-promoting objective: independent samples can naturally arise from different regions of the individualized recourse distribution.

After enforcing feasibility as described in the preceding subsection, we obtain a pool of feasible counterfactuals. The proximity and sparsity tilts bias this pool toward interventions that require smaller and fewer changes from the factual instance. Directly presenting the entire pool, however would be redundant and impractical, as several samples may correspond to similar recommendations. 

We therefore construct a compact and representative recourse set by clustering the feasible counterfactuals using $k$-medoids returning the medoid of each cluster. The medoids are themselves sampled feasible counterfactuals, ensuring that every returned recommendation satisfies the required constraints. The number of clusters $k$ is selected separately for each factual instance using the silhouette score~\cite{ROUSSEEUW198753}. Therefore, the number of returned recourses adapts to the structure of the sampled distribution rather than being fixed uniformly across individuals.

Importantly, clustering serves only to summarize the diversity inherent in the feasible sample pool; it does not introduce diversity into TRD. We examine this distinction empirically by comparing random selection, fixed $k$ clustering, and adaptive clustering.

Let \(|\mathcal{C}|\) denote the size of the probabilistic circuit, \(L\) the number of input-distribution nodes, and \(B\) the sampling budget. Constructing an individualized TRD requires \(O(|\mathcal{C}|)\) time, while imposing feature-wise actionability constraints requires \(O(L)\) time. Rejection sampling has complexity \(O(BL)\), excluding the costs of classifier evaluation and causal-consistency checking.

\section{Experimental Setup}

\paragraph{Datasets.}
We adopt the LiCE benchmark~\cite{nemecek2025generatinglikelycounterfactualsusing}, including its five-fold data splits, preprocessing,
trained classifiers, dataset-specific actionability and causal constraints, and definitions of distance and sparsity. The benchmark comprises Adult~\cite{BeckerKohavi1996Adult}, German
Credit~\cite{Hofmann1994GermanCredit}, and Give Me Some Credit
(GMSC)~\cite{KaggleGiveMeSomeCredit}. For each dataset and fold, we learn the positive-class distribution ($p^+$) exclusively from ground-truth favorable instances in the corresponding training split. The PC structure and parameters are learned with LearnSPN \cite{pmlr-v28-gens13}  as implemented in SPFlow~\cite{Molina2019SPFlow}. 

The tabular benchmarks constitute our primary evaluation setting. We additionally use MNIST~\cite{726791} to examine the behavior of TRD in a higher-dimensional domain. We evaluate five source-to-target digit tasks and visualize representative results for $8 \rightarrow 0$ and $7 \rightarrow 1$. Each $28 \times 28$ image is binarized using an intensity threshold of $0.5$. For each target digit, we learn a class-conditional PC from its training images, representing each pixel with a two-bin histogram. Given a factual image from the source class, we tilt the target-class PC toward the factual using pixelwise Hamming distance. We evaluate the resulting counterfactuals in terms of validity, plausibility, and diversity.

\paragraph{Methods for Comparison.}
On the tabular benchmarks, we compare TRD with DiCE, LiCE, and MIO (LiCE formulation with its likelihood objective removed), which are the methods in the LiCE benchmark capable of generating multiple counterfactuals. All methods are evaluated using the same data splits, trained classifiers, preprocessing, and dataset-specific feasibility constraints. Each baseline is configured to return at most ten counterfactuals for every factual instance. TRD instead returns an adaptively sized recourse set using the sampling and clustering procedure. Baseline comparisons are restricted to the tabular datasets; the MNIST experiments examine the behavior of exponential tilting in a higher-dimensional setting. The sampling budget for TRD is set at 10,000. TRD requires tuning of the hyper-parameters $\delta$ and $\nu$. The optimal values are obtained through a grid search using a separate training subset for every fold. Further details are provided in  Section~B.

\subsection{Evaluation Metrics}
\paragraph{Validity, Feasibility, and Coverage.}
A generated candidate is a counterfactual if the classifier assigns
it a favorable-class score above the dataset-specific decision
threshold: $0.5$. We report \emph{Valid \%}, the proportion of generated candidates satisfying this criterion.

We evaluate the components of feasibility separately.
\emph{Actionable \%} is the proportion of candidates satisfying
the immutability and monotonicity constraints, while
\emph{Causal \%} is the proportion satisfying the available causal
constraints. These percentages are computed independently over all
generated candidates. A candidate is feasible only if it is valid,
actionable, and causally consistent. \emph{Served \%} denotes the
proportion of factual instances for which at least one feasible
counterfactual is obtained, and \emph{\# ret.} is the mean number
of feasible counterfactuals returned per served factual and averaged over all factuals.

\paragraph{Plausibility, proximity, and sparsity.}
Plausibility is measured by negative log-likelihood under the
positive-class PC $p^{+}$; lower values indicate more plausible
instances. As any member of a returned set may be acted upon,
we report the best, mean, and worst NLL within each final recourse
set. Proximity is measured using the benchmark's MAD-weighted
$\ell_1$ distance~\cite{Wachter2018CounterfactualGDPR} from the factual instance, while sparsity is the
number of modified features. Unless stated otherwise, these metrics
are computed over the final feasible recourse set returned for each
factual and averaged over all factuals.

\paragraph{Diversity.}
We measure diversity using Count-Diversity \cite{Mothilal2020DiCE}, the mean pairwise
fraction of features on which two returned recourses differ, and
$k_{\mathrm{eff}}$, the number of distinct sets of modified
features represented in the returned set. Whereas Count-Diversity
measures feature-level separation, $k_{\mathrm{eff}}$ captures the
number of distinct intervention strategies.


For MNIST, validity is the proportion of generated images classified
as the target digit, plausibility is measured by NLL under the
target-class PC, sparsity is the number of modified pixels ($\ell_0 =\ell_1$), and
Count-Diversity is computed over pixels.

Further details on Supplementary Section~F.

\section{Results and Discussion}
\paragraph{Q1. Does TRD provide a better overall balance of
recourse quality than pointwise optimization methods?}
Tables~\ref{tab:diversity} and~\ref{tab:constraints} compare TRD with existing diverse recourse methods in terms of diversity, plausibility, constraint satisfaction, and coverage. TRD does not maximize every individual metric; rather, it provides the most consistent balance across them. It generates multiple distinct recourse strategies, maintains high plausibility across the returned set, and serves every evaluated factual instance with feasible counterfactuals. 

The competing methods exhibit substantially sharper trade-offs. DiCE achieves the highest Count-Diversity on all three datasets, but many of its generated candidates violate the feasibility requirements, resulting in low coverage on German Credit. LiCE generates highly plausible counterfactuals, but its returned sets largely collapse to a single effective recourse strategy despite containing multiple solutions. Removing the likelihood objective, as in MIO, increases the number of distinct strategies but substantially reduces plausibility. TRD avoids these extremes: although it is not always the most diverse method under Count-Diversity, it returns multiple distinct strategies that remain plausible and feasible, while serving all evaluated factual instances.

TRD samples are actionable by construction. Classifier validity and causal consistency are enforced through rejection sampling, with mean  acceptance rates of $18.5\%$, $18.1\%$, and $71.1\%$ on Adult, German Credit, and GMSC, respectively. These correspond to approximately $5.4$, $5.5$, and $1.4$ candidate draws per accepted feasible counterfactual. All evaluated factuals obtain a feasible recourse within the fixed sampling budget, and every returned recommendation is therefore valid, actionable, and causally consistent. Ablations on the sampling budget are presented in the Supplementary Material Section~C.

\begin{table}[!t]
\centering
\small
\setlength{\tabcolsep}{2pt}
\begin{tabular}{@{}l cc ccc c@{}}
\toprule
& & & \multicolumn{3}{c}{NLL\,$\downarrow$} & \\
\cmidrule(lr){4-6}
Method & Count-Div\,$\uparrow$ & $k_{\mathrm{eff}}\!\uparrow$/\#ret. & best & mean & worst  \\
\midrule
\multicolumn{7}{c}{\textit{Adult}}\\
\midrule
DiCE          & \textbf{.62}\std{.01} & \textbf{5.80}/6.40 & 20.49 & 26.29 & 31.54  \\
LiCE          & .00 & 1.00/10.00 & \textit{12.11} & \textbf{12.11} & \textbf{12.11}  \\
MIO           & .19\std{.00} & 2.45/9.68 & 17.21 & 19.41 & 22.20   \\
\textbf{TRD}  & \textit{.27}\std{.01} & \textit{2.52}/4.09 & \textbf{11.75} & \textit{14.24} & \textit{17.39}  \\
\midrule
\multicolumn{7}{c}{\textit{German Credit}}\\
\midrule
DiCE$^{\ddagger}$ & \textbf{.39} & \textit{5.00}/5.00 & 27.67 & 36.91 & 66.03  \\
LiCE          & .01\std{.01} & 1.18/10.00 & \textit{22.27} & \textit{22.35} & \textit{22.48}  \\
MIO           & .12\std{.00} & \textbf{7.72}/10.00 & 28.84 & 32.97 & 36.58  \\
\textbf{TRD}  & \textit{.21}\std{.01} & 3.36/3.49 & \textbf{17.83} & \textbf{19.49} & \textbf{21.09}  \\
\midrule
\multicolumn{7}{c}{\textit{GMSC}}\\
\midrule
DiCE          & \textbf{.51}\std{.09} & \textit{4.75}/6.38 & \textit{7.79} & 12.83 & 17.48 \\
LiCE          & .00 & 1.00/10.00 & 10.86 & \textit{10.86} & \textbf{10.86} \\
MIO           & .23\std{.03} & \textbf{8.62}/10.00 & 17.48 & 18.79 & 20.02 \\
\textbf{TRD}  & \textit{.44} & 1.83/2.67 & \textbf{7.49} & \textbf{9.74} & \textit{11.80} \\
\bottomrule
\end{tabular}
\caption{Diversity and plausibility of returned recourse sets;
mean\,$\pm$\,std over folds, on each method's feasible subset (cf.\
Table~\ref{tab:constraints}). \textbf{Bold}/\textit{italic}: best/second-best
per column within each dataset. $k_{\mathrm{eff}}$/\#ret.\ is distinct
strategies over mean number returned; NLL columns are best/mean/worst within
each set. $^{\ddagger}$DiCE serves one individual on Credit, so
its values admit no fold-level variance.}
\label{tab:diversity}
\end{table}

\begin{table}[!ht]
\centering
\small
\setlength{\tabcolsep}{3pt}
\begin{tabular}{@{}l cccc@{}}
\toprule
Method & Valid\,\%\,$\uparrow$ & Action.\,\%\,$\uparrow$ & Causal\,\%\,$\uparrow$
& Served\,\%\,$\uparrow$ \\
\midrule
\multicolumn{5}{c}{\textit{Adult}}\\
\midrule
DiCE          & 96.2\std{1.8} & \textit{43.1}\std{2.9} & \textit{72.3}\std{3.9} & 59.8\std{5.2} \\
LiCE          & 97.3\std{1.9} & \textbf{100} & \textbf{100} & 89.1\std{3.0} \\
MIO   & 93.9\std{2.2} & \textbf{100} & \textbf{100} & 97.0\std{1.4} \\
\textbf{TRD}  & \textbf{100} & \textbf{100} & \textbf{100} & \textbf{100} \\
\midrule
\multicolumn{5}{c}{\textit{German Credit}}\\
\midrule
DiCE          & \textbf{100} & \textit{0.4}\std{1.0} & \textit{80.3}\std{3.9} & 0.6\std{1.4} \\
LiCE          & \textbf{100} & \textbf{100} & \textbf{100} & \textit{99.1}\std{1.9} \\
MIO  & \textbf{100} & \textbf{100} & \textbf{100} & \textit{99.1}\std{1.9} \\
\textbf{TRD}  & \textbf{100} & \textbf{100} & \textbf{100} & \textbf{100} \\
\midrule
\multicolumn{5}{c}{\textit{GMSC}}\\
\midrule
DiCE          & \textbf{100} & \textit{63.8}\std{6.3} & --- & \textbf{100} \\
LiCE          & \textbf{100} & \textbf{100} & --- & \textbf{100} \\
MIO           & \textbf{100} & \textbf{100} & --- & \textbf{100} \\
\textbf{TRD}  & \textbf{100} & \textbf{100} & --- & \textbf{100} \\
\bottomrule
\end{tabular}
\caption{Constraint satisfaction and coverage; mean\,$\pm$\,std over
folds. The first three columns are \emph{independent} satisfaction rates over
all returned counterfactuals; a counterfactual may be valid yet violate
actionability.}
\label{tab:constraints}
\end{table}
\paragraph{Q2. Does TRD maintain plausibility across the entire returned recourse set?} 
Evaluating only the most plausible counterfactual can obscure the quality of a diverse recourse set, as an individual may ultimately select any of the returned recommendations. We therefore report the best, mean, and worst negative log-likelihood (NLL) within each returned set under the positive-class distribution $p^{+}$ in Table \ref{tab:diversity}.


TRD maintains strong plausibility throughout its returned sets. It achieves the lowest best and mean NLL on all three datasets and the lowest worst NLL on German Credit. On Adult and GMSC, LiCE attains a lower worst NLL; however, LiCE's ten returned counterfactuals largely correspond to a single effective strategy and consequently have nearly identical likelihoods. In contrast, TRD retains low NLL while returning multiple distinct strategies. Diversity-oriented methods such as DiCE and MIO generally exhibit substantially higher mean and worst NLL, indicating that some of their returned recommendations lie in considerably less plausible regions. These results show that TRD's diversity is not obtained by appending implausible counterfactuals to an otherwise plausible set.
\begin{figure}[h]
    \centering
    \includegraphics[width=\linewidth]{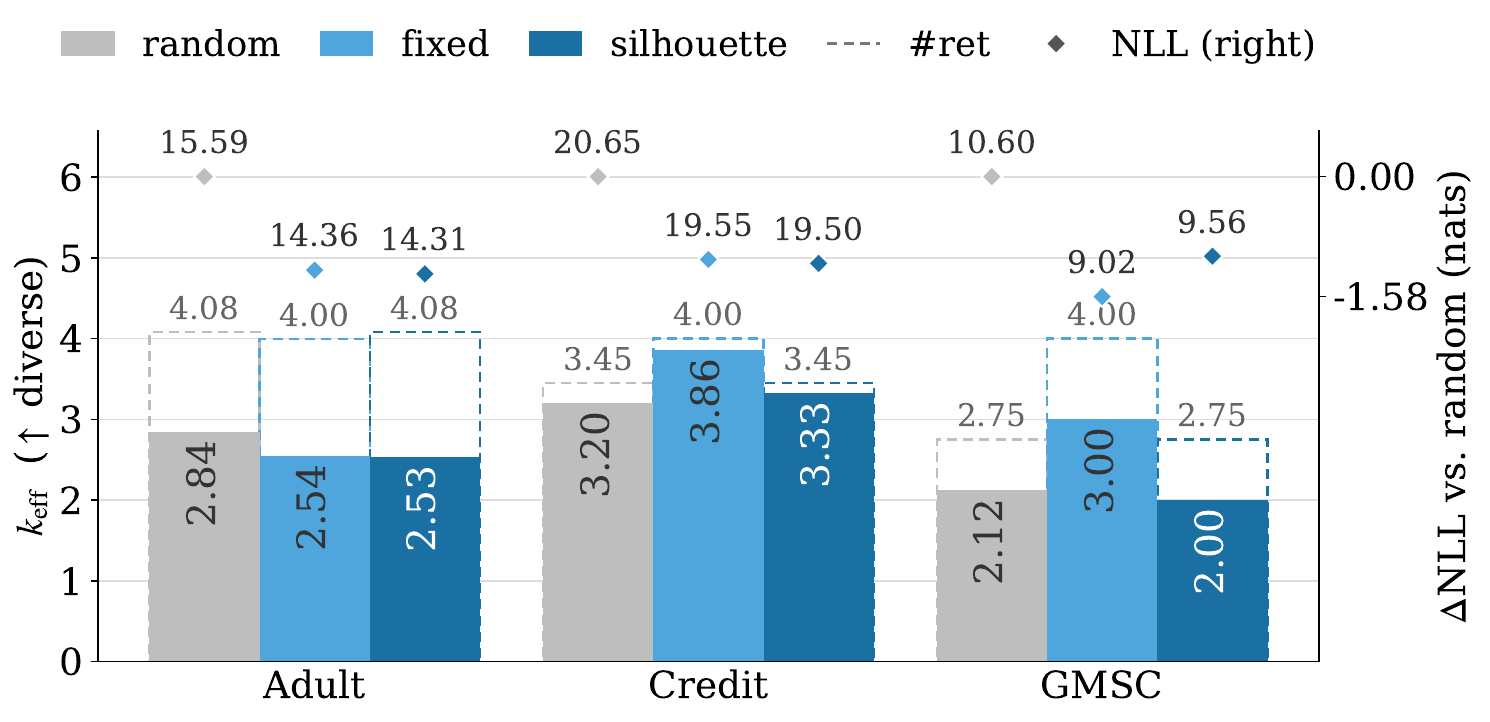}
    \caption{\textbf{Diversity is intrinsic to the recourse distribution.}
Bar vs.\ its dashed outline: distinct recourses vs.\ number returned, so the
gap is redundancy. Diamonds (right axis): mean NLL under $p^{+}$ minus
random's, so the random diamond sits at $0$ and lower is better; labels are
raw NLL. Random selection alone is already diverse; medoids add plausibility,
not diversity.}
\label{fig:q3}
\end{figure}

\paragraph{Q3. Is diversity intrinsic to TRD, or introduced by clustering?} Figure \ref{fig:q3} compares three sampling strategies: (i) randomly selecting a fixed number of recourses from the distribution, (ii) clustering with a fixed number of clusters, and (iii) adaptive clustering using the silhouette index. For each strategy we also compute the change in the average NLL of the recourse set with respect to the random baseline. The results show that even randomly sampled recourses exhibit substantial diversity. At the same time, there is only a consistent reduction in the average NLL due to clustering. These results indicate that clustering primarily summarizes the diversity already present in TRD and improves the representativeness of the returned set, rather than creating diversity through an explicit objective.

\begin{figure}[!t]
    \centering
    \includegraphics[width=1.0\linewidth]{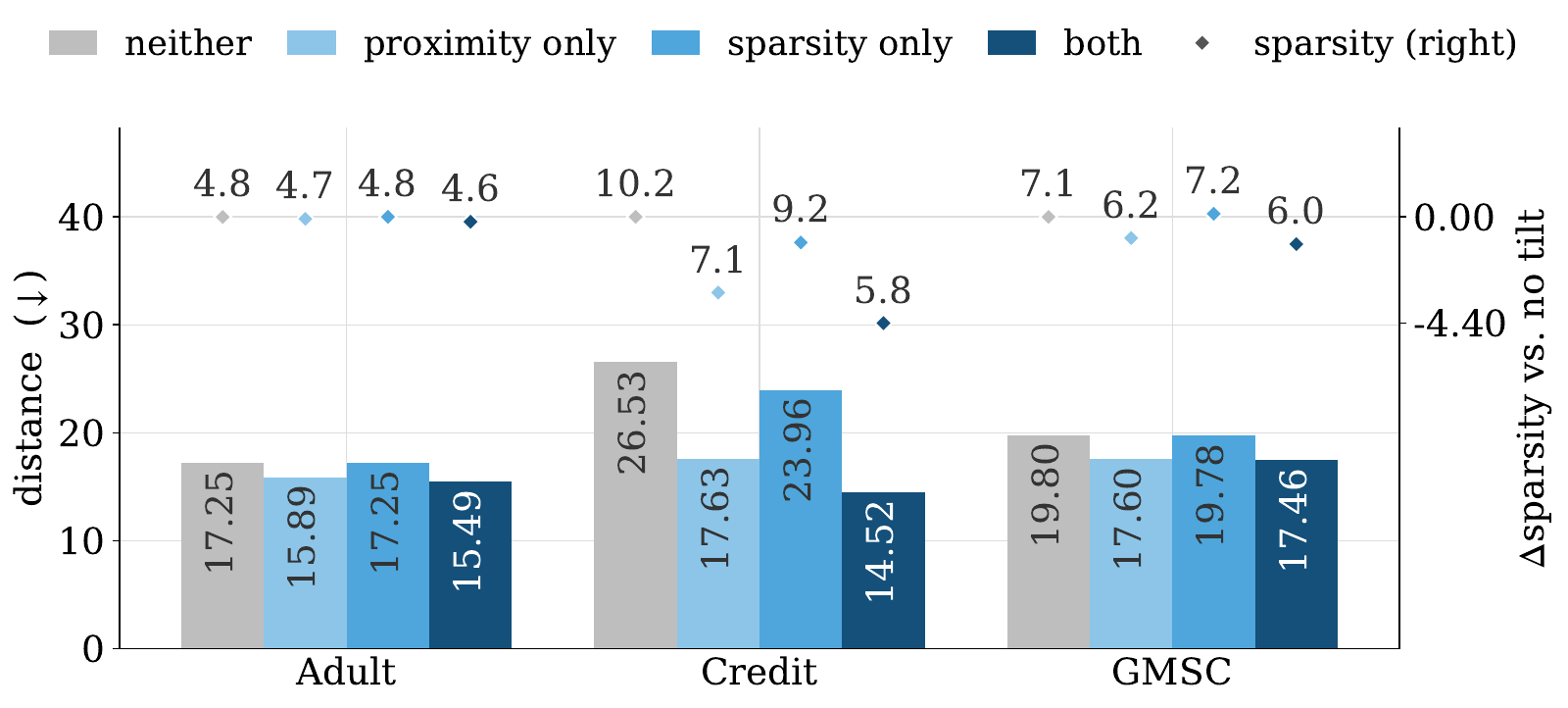}
    \caption{\textbf{Proximity and sparsity improve together under joint tilting.}
Bars (left axis): mean distance from the factual. Diamonds (right axis): mean
sparsity minus the untilted run's, so \emph{neither} sits at $0$; labels are
the raw number of features changed.}
    \label{fig:q4}
\end{figure}

\paragraph{Q4. Do the proximity and sparsity tilts provide complementary control over the recourse quality?} Figure ~\ref{fig:q4} compares the un-tilted positive-class distribution with proximity-only, sparsity-only, and joint tilting. Across all three tabular datasets, applying both tilt terms yields the lowest distance from the factual and fewest number of feature changes, indicating that proximity and sparsity can be controlled jointly through TRD. At the selected $\nu$, the sparsity tilt alone has little or no effect on Adult and GMSC; the gain there comes from the proximity term, with sparsity contributing only in combination. This complementarity is an empirical property of the evaluated datasets rather than a consequence of Proposition \ref{prop:concentration}. Detailed sensitivity analyses of the individual proximity and sparsity tilting parameters are provided in the Supplementary Material Section~D.

\paragraph{Q5. Does TRD extend beyond tabular data?}Figure~\ref{fig:tilt_sweep} visualizes representative \(8\!\rightarrow\!0\) and \(7\!\rightarrow\!1\) transformations. Without tilting, samples resemble generic instances of the target class; as the tilt strength increases, they retain progressively more pixels from the factual image, reducing the number of required changes. Excessive tilting, however, concentrates the distribution too strongly around the source image and lowers target-class validity, illustrating the trade-off between proximity and validity. 

Across the two evaluated tasks, TRD serves every factual instance with valid recourses while maintaining diversity and plausibility. These results show that the proposed construction extends computationally beyond tabular data and that its tilting mechanism behaves consistently in a high-dimensional domain, although the experiment is intended as a mechanism study rather than a claim of state-of-the-art image recourse generation.

\begin{figure}[!ht]
    \centering
    \includegraphics[width=0.9\linewidth]{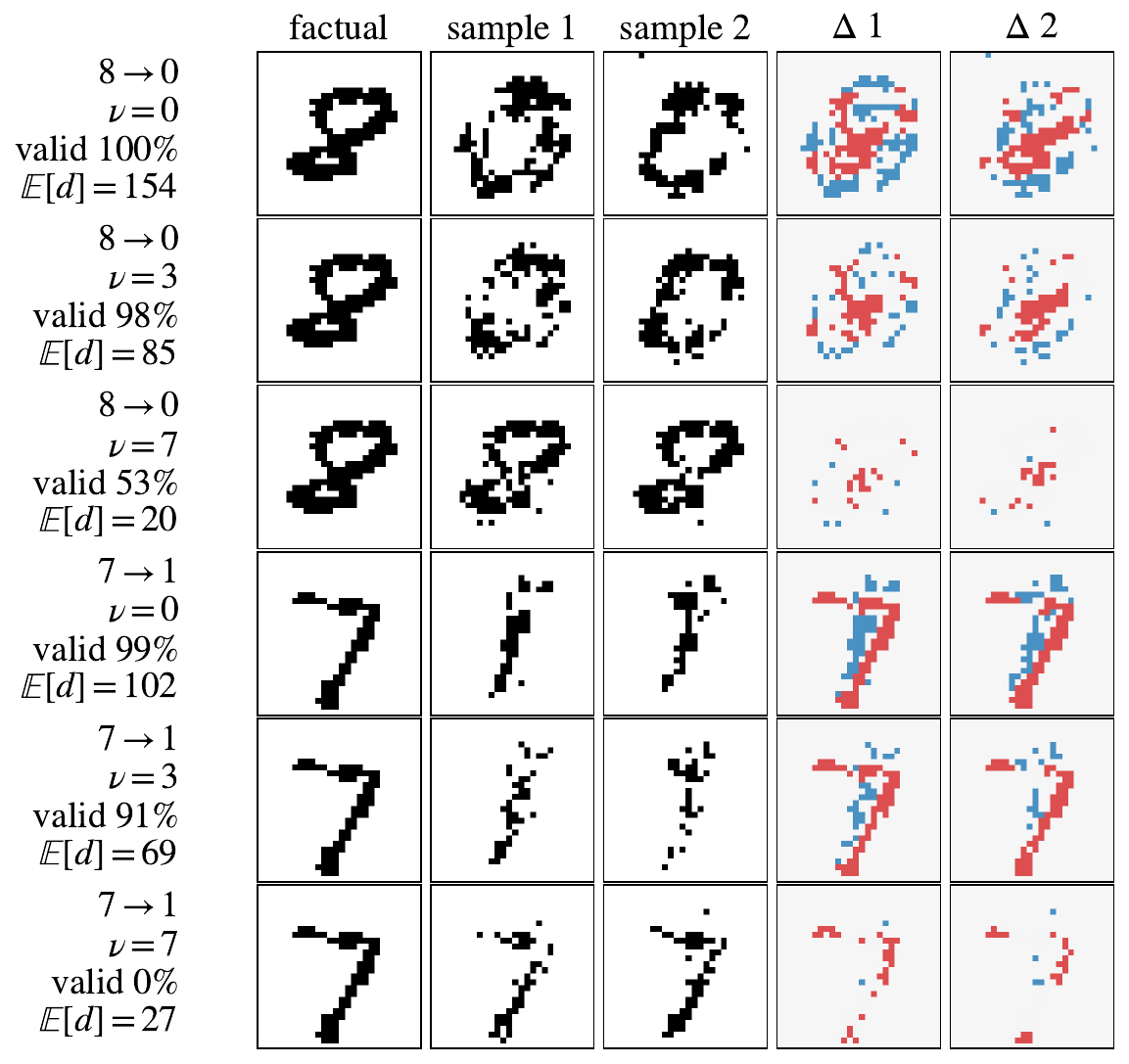}
    \caption{The tilt on MNIST. Each row tilts a target-digit circuit toward a factual source-digit image (left) at strength $\nu$; \emph{sample 1 - 2} are independent draws from the tilted circuit and $\Delta 1$--$2$ their pixel differences from the factual (red: present in the factual and removed by the sample; blue: added by the sample). }
    \label{fig:tilt_sweep}
\end{figure}

\begin{table}[!ht]
\centering\small
\setlength{\tabcolsep}{2pt}
\begin{tabular}{@{}l c ccc cc@{}}
\toprule
& & \multicolumn{3}{c}{NLL\,$\downarrow$} & & \\
\cmidrule(lr){3-5}
Task & Valid\,\%\,$\uparrow$ & best & mean & worst
& Spars.\,(px)\,$\downarrow$ & Count-Div\,$\uparrow$ \\
\midrule
$8\rightarrow0$ & 95.6 & 134.6 & 188.7 & 250.2 & 90.8 & 0.124 \\
$7\rightarrow1$ & 88.5 & 51.1 & 91.1 & 146.3 & 71.1 & 0.061 \\
\bottomrule
\end{tabular}
\caption{MNIST recourse quality at the operating point $\nu{=}3$
(all factuals served).}
\label{tab:mnist}
\end{table}

\section{Conclusion}
We introduce Tractable Recourse Distributions (TRDs), a distributional formulation of algorithmic recourse that represents favorable alternatives available to an individual rather than constructing a fixed set of optimized counterfactuals. We show that smooth and decomposable probabilistic circuits are closed under exponential tilting by additive recourse costs, enabling exact TRD construction for proximity and sparsity while preserving tractable probabilistic inference. The resulting distribution requires neither retraining nor per-instance optimization and can be constructed without access to the training data.

Our experiments show that TRDs retain sufficient probability mass over feasible counterfactuals for rejection sampling to provide a practical alternative to repeated pointwise optimization in the evaluated settings. The returned recourses are diverse and plausible, while the tabular and MNIST studies demonstrate joint control over proximity and sparsity. A remaining limitation is that the effectiveness of rejection sampling depends on the feasible probability mass, which may diminish under more restrictive validity or causal constraints. Future work will investigate compiling these constraints directly into the circuit and adapting tilt strengths to individual factuals.
\bibliography{aaai2027}
\appendix
\appendix
\onecolumn

\noindent \hrulefill
\section*{Supplementary Material}

\noindent \hrulefill
\vspace{1em}

\appendix
\section{Proofs}                                   \label{app:proofs}
  \subsection{Proof of Proposition 1}  \label{app:proof-exact}

\begin{proposition}[Exact realization of TRD]
Let $p^{+}$ be represented by a smooth and decomposable PC, and let $G$ be additively separable.
Suppose that, for every input node, its exponentially tilted
distribution and corresponding normalizer can be computed
exactly. Then the TRD admits an exact realization as a smooth
and decomposable PC with the same topology as $p^{+}$.

For every node $n$, the transformed subcircuit
computes
\[
\widetilde{p}_n(\mathbf{x}_{\phi(n)})
=
\frac{
p_n(\mathbf{x}_{\phi(n)})
\exp\{-G_{\phi(n)}(\mathbf{x}_{\phi(n)};
                  \mathbf{x}^{-}_{\phi(n)})\}
}{
Z_n
},
\]
where
\(
Z_n
=
\int
p_n(\mathbf{x}_{\phi(n)})
\exp\{-G_{\phi(n)}(\mathbf{x}_{\phi(n)};
                  \mathbf{x}^{-}_{\phi(n)})\}
\,d\mathbf{x}_{\phi(n)}.
\)
The transformed circuit is constructed recursively as follows.

For an input node $n$ with $\phi(n)=\{j\}$,
\[
\widetilde{p}_n(x_j)
=
\frac{
p_n(x_j)\exp\{-g_j(x_j;x_j^{-})\}
}{
Z_n
}.
\]

For a product node $n$,
\[
Z_n=\prod_{c\in\operatorname{ch}(n)}Z_c,
\qquad 
\widetilde{p}_n
=
\prod_{c\in\operatorname{ch}(n)}\widetilde{p}_c.
\]

For a sum node $n$, define
\[
Z_n
=
\sum_{c\in\operatorname{ch}(n)}
\theta_{n,c}Z_c,
\qquad
\widetilde{\theta}_{n,c}
=
\frac{\theta_{n,c}Z_c}{Z_n}.
\]
Then
\(
\widetilde{p}_n
=
\sum_{c\in\operatorname{ch}(n)}
\widetilde{\theta}_{n,c}\widetilde{p}_c.
\)
At the root $r$, $\widetilde{p}_r$ is exactly the tractable
recourse distribution
$\widetilde{p}_{\delta,\nu}
(\mathbf{x}\mid\mathbf{x}^{-})$.
\end{proposition}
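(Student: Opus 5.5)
The proof is a structural induction over the DAG in topological order, from the inputs up to the root. The claim to carry along is: for every node $n$, the recursively constructed $\widetilde{p}_n$ equals $p_n(\mathbf{x}_{\phi(n)})\exp\{-G_{\phi(n)}\}/Z_n$, with $Z_n$ the stated integral, and $Z_n$ is strictly positive and finite. Because the PC is a DAG, each node is handled once its children are; shared subcircuits cause no trouble, since the tilted quantities depend only on the node, not on the parent.

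\emph{Input nodes.} This case is exactly the hypothesis: the tilted leaf and its normalizer are computable, and $G_{\{j\}}=g_j$. For positivity I will assume, as in the definition of exponential tilting, that $0<Z_n<\infty$; this holds automatically for the finite-support categorical and histogram leaves with finite $g_j$.

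\emph{Product nodes.} By decomposability the children's scopes partition $\phi(n)$, so additive separability gives $G_{\phi(n)}=\sum_c G_{\phi(c)}$, and hence $\exp\{-G_{\phi(n)}\}=\prod_c\exp\{-G_{\phi(c)}\}$. Multiplying by $p_n=\prod_c p_c$ shows that the unnormalized tilted product equals $\prod_c p_c e^{-G_{\phi(c)}}$. Fubini/Tonelli (the integrand is nonnegative) then factorizes the integral into $Z_n=\prod_c Z_c$. Dividing gives $\widetilde{p}_n=\prod_c\widetilde{p}_c$.

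\emph{Sum nodes.} By smoothness all children share the scope $\phi(n)$, so the same factor $e^{-G_{\phi(n)}}$ multiplies every term. Linearity of the integral then gives $Z_n=\sum_c\theta_{n,c}Z_c$, which is positive since the $\theta$'s sum to one and each $Z_c>0$. Writing $p_c e^{-G}=Z_c\widetilde{p}_c$ gives $p_n e^{-G}/Z_n=\sum_c(\theta_{n,c}Z_c/Z_n)\widetilde{p}_c$. The new weights $\widetilde{\theta}_{n,c}$ are nonnegative and sum to one by the definition of $Z_n$, so the result is again a valid sum node.

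\emph{Root and structure.} At the root, $\phi(r)=[D]$, so $G_{\phi(r)}=G$ and $Z_r=Z(\delta,\nu;\mathbf{x}^-)$; therefore $\widetilde{p}_r$ is exactly the distribution of Definition~\ref{def:trd}. No edges or scopes change, so smoothness and decomposability are inherited and the topology is identical. The only delicate step is justifying the product factorization of the integral and ensuring every $Z_n$ lies in $(0,\infty)$ so that the divisions are legitimate. Tonelli handles the first point. For the second, $Z_n=\mathbb{E}_{p_n}[e^{-G}]\le 1$ when $G\ge0$ (in general, finiteness propagates from the leaves through finite sums and products), and positivity propagates from the leaves as shown above.
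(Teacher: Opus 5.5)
Your proposal is correct and follows the paper's proof essentially step for step. Both are a structural induction in which the leaf case holds by hypothesis, the product case uses decomposability and additive separability to factorize the tilt and hence $Z_n=\prod_c Z_c$, the sum case uses smoothness and linearity to get $Z_n=\sum_c\theta_{n,c}Z_c$ with renormalized weights, and the root is identified with Definition~\ref{def:trd}. Your extra care (Tonelli, positivity and finiteness of every $Z_n$, shared subcircuits, inherited smoothness and decomposability) makes explicit what the paper leaves implicit, since the paper only notes $0<Z_n\le 1$ at the leaves from $g_j\ge 0$.
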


\begin{proof}
We proceed by structural induction over the nodes of the circuit,
establishing for every node $n$ that the transformed subcircuit computes
\[
\widetilde{p}_n(\mathbf{x}_{\phi(n)})
=\frac{p_n(\mathbf{x}_{\phi(n)})\exp\{-G_{\phi(n)}\}}{Z_n},
\qquad
Z_n=\int p_n\exp\{-G_{\phi(n)}\}\,d\mathbf{x}_{\phi(n)},
\]
the integral being a sum where the scope is discrete.

\paragraph{Input nodes.}
For $n$ with $\phi(n)=\{j\}$ we have $G_{\phi(n)}=g_j(x_j;x_j^{-})$, and the
claim is the definition of $\widetilde{p}_n$ together with
$Z_n=\int p_n(x_j)e^{-g_j}\,dx_j$, which is computable exactly by assumption.
Since $g_j\ge 0$ we have $e^{-g_j}\le 1$ and hence $0<Z_n\le 1$, so
$\widetilde{p}_n$ is a well-defined distribution over $\mathcal{X}_j$.

\paragraph{Product nodes.}
Let $n$ be a product node. By decomposability the children have disjoint
scopes with $\phi(n)=\bigcup_{c\in\operatorname{ch}(n)}\phi(c)$, and by
additive separability $G_{\phi(n)}=\sum_{c}G_{\phi(c)}$, so the tilt
factorizes: $\exp\{-G_{\phi(n)}\}=\prod_{c}\exp\{-G_{\phi(c)}\}$. With
$p_n=\prod_c p_c$, the integral factorizes:
\[
Z_n=\int\prod_c p_c\,e^{-G_{\phi(c)}}\,d\mathbf{x}_{\phi(n)}
   =\prod_c\int p_c\,e^{-G_{\phi(c)}}\,d\mathbf{x}_{\phi(c)}
   =\prod_c Z_c,
\]
and therefore
$\prod_c\widetilde{p}_c=\prod_c p_c e^{-G_{\phi(c)}}/Z_c
= p_n e^{-G_{\phi(n)}}/Z_n=\widetilde{p}_n$, as claimed.

\paragraph{Sum nodes.}
Let $n$ be a sum node. By smoothness all children share the scope
$\phi(n)$, so the same tilt factor $e^{-G_{\phi(n)}}$ multiplies each term of
$p_n=\sum_c\theta_{n,c}p_c$. Integrating,
\[
Z_n=\int\sum_c\theta_{n,c}\,p_c\,e^{-G_{\phi(n)}}\,d\mathbf{x}_{\phi(n)}
   =\sum_c\theta_{n,c}Z_c .
\]
With $\widetilde{\theta}_{n,c}=\theta_{n,c}Z_c/Z_n$ we obtain
\[
\sum_c\widetilde{\theta}_{n,c}\widetilde{p}_c
=\sum_c\frac{\theta_{n,c}Z_c}{Z_n}\cdot\frac{p_c e^{-G_{\phi(n)}}}{Z_c}
=\frac{e^{-G_{\phi(n)}}}{Z_n}\sum_c\theta_{n,c}p_c
=\widetilde{p}_n .
\]
Moreover $\sum_c\widetilde{\theta}_{n,c}
=\frac{1}{Z_n}\sum_c\theta_{n,c}Z_c=1$, so the transformed node is again a
normalized sum node.


\paragraph{Root.}
Applying the induction at the root $r$, where $\phi(r)=\{1,\dots,d\}$ and
$p_r=p^{+}$, gives $\widetilde{p}_r=p^{+}\exp\{-G\}/Z$, which is the TRD by definition.
\end{proof}

\subsection{Proof of Proposition 2}  \label{app:proof-conce}

\begin{proposition}[Concentration of Intervention Cost]
Let $\tilde p(\mathbf{x}\mid\mathbf{x}^-) = p^{+}(\mathbf{x})\exp\{-G(\mathbf{x};\mathbf{x}^-)\}/Z_G(\mathbf{x}^-)$,
where $G(\mathbf{x};\mathbf{x}^-)\ge 0$. Then, for every $\epsilon>0$,
\[
\Pr_{\mathbf{x}\sim\tilde p}\!\left[G(\mathbf{x};\mathbf{x}^-)>\epsilon\right]
\;\le\; \frac{\exp(-\epsilon)}{Z_G(\mathbf{x}^-)} .
\]

\end{proposition}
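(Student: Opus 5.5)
The plan is to write the tail probability directly as an integral against the tilted density and bound the tilt factor pointwise on the event. By Definition~\ref{def:trd} (with a general nonnegative cost $G$ in place of $G_{\delta,\nu}$),
\[
\Pr_{\mathbf{x}\sim\tilde p}\!\left[G(\mathbf{x};\mathbf{x}^-)>\epsilon\right]
=\frac{1}{Z_G(\mathbf{x}^-)}\int_{\{G>\epsilon\}} p^{+}(\mathbf{x})\exp\{-G(\mathbf{x};\mathbf{x}^-)\}\,d\mathbf{x},
\]
where the integral becomes a sum when the features are discrete. This step is valid because $0<Z_G<\infty$ is assumed in the definition of exponential tilting. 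The set $\{G>\epsilon\}$ is measurable because $G$ is measurable.

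Next, on the event $\{G>\epsilon\}$ we have $\exp\{-G\}<\exp(-\epsilon)$. The integral is therefore at most $\exp(-\epsilon)\int_{\{G>\epsilon\}}p^{+}(\mathbf{x})\,d\mathbf{x}=\exp(-\epsilon)\Pr_{p^{+}}[G>\epsilon]$. Since $p^{+}$ is a normalized distribution, this probability is at most $1$. Dividing by $Z_G(\mathbf{x}^-)$ gives the stated bound.

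The same argument gives a sharper intermediate statement, $\exp(-\epsilon)\Pr_{p^{+}}[G>\epsilon]/Z_G$, which is worth recording. There is no real obstacle. The only point needing care is the positivity and finiteness of $Z_G$. Because $G\ge 0$, we have $Z_G\le 1$, so finiteness is automatic. Positivity holds whenever $G$ is finite on a set of positive $p^{+}$-mass, which is true for the TRD cost $G_{\delta,\nu}$. The same reasoning shows why the bound is informative: the factor $1/Z_G\ge 1$ is exactly the price paid for renormalization.

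Finally, I would specialize to $G_{\delta,\nu}$ to connect with the remark that follows the statement. Since $\nu\|\mathbf{x}-\mathbf{x}^-\|_0\le G_{\delta,\nu}$, the event $\{\|\mathbf{x}-\mathbf{x}^-\|_0>k\}$ is contained in $\{G_{\delta,\nu}>\nu k\}$. Its probability is therefore at most $e^{-\nu k}/Z$. The analogous bound $e^{-\delta t}/Z$ holds for $\{\|\mathbf{x}-\mathbf{x}^-\|_1>t\}$. This yields the claimed exponential suppression of distant and non-sparse counterfactuals.
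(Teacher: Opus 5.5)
Your proof is correct and is essentially the paper's argument. The paper applies Markov's inequality to $e^{G}$ under $\tilde p$ and uses the cancellation $\mathbb{E}_{\tilde p}[e^{G}]=1/Z_G(\mathbf{x}^-)$, which is your pointwise bound $e^{-G}\le e^{-\epsilon}$ on $\{G>\epsilon\}$ in packaged form. Your direct integral also retains the slightly sharper intermediate $e^{-\epsilon}\Pr_{p^{+}}[G>\epsilon]/Z_G(\mathbf{x}^-)$; Markov's step loses this because it replaces $\mathbb{E}_{\tilde p}[e^{G}\mathbb{I}[G>\epsilon]]=\Pr_{p^{+}}[G>\epsilon]/Z_G(\mathbf{x}^-)$ with the full expectation $1/Z_G(\mathbf{x}^-)$.
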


\begin{proof}
Fix $\epsilon>0$. Since $G\ge 0$, the event $\{G>\epsilon\}$ can be written as
$\{e^{G}>e^{\epsilon}\}$. Applying Markov's inequality to the non-negative
random variable $e^{G}$,
\[
\Pr[G>\epsilon] = \Pr[e^{G}>e^{\epsilon}]
\;\le\; \frac{\mathbb{E}_{\tilde p}[e^{G}]}{e^{\epsilon}} .
\]
Expanding the expectation,
\[
\mathbb{E}_{\tilde p}\!\left[e^{G}\right]
= \int e^{G(\mathbf{x};\mathbf{x}^-)}\,
  \frac{p^{+}(\mathbf{x})\,e^{-G(\mathbf{x};\mathbf{x}^-)}}{Z_G(\mathbf{x}^-)}\,d\mathbf{x}
= \frac{1}{Z_G(\mathbf{x}^-)}\int p^{+}(\mathbf{x})\,d\mathbf{x}
= \frac{1}{Z_G(\mathbf{x}^-)},
\]
since $p^{+}$ is a normalized density. Substituting gives the claim.
\end{proof}

\section{Hyperparameter Selection}                 \label{app:hparams}

\paragraph{Tilt weights.} The two tilt strengths $(\delta, \nu)$ are the only hyperparameters. Proximity and sparsity improve monotonically as either strength grows (Tables~\ref{tab:delta_sweep} and~\ref{tab:nu_sweep}), while coverage is unaffected until the strength becomes large enough that so much mass concentrates on the denied factual that too little remains on the accepted side of the boundary for any valid recourse to be sampled. Any strength below that collapse boundary is therefore dominated by a larger one, which gives closer and sparser recourses while serving every individual; we accordingly select the largest that succeeds.

Selection uses 30 denied factuals drawn from the test split, which is disjoint from the subtest split on which all results are reported. A factual succeeds at ($\delta, \nu$) if at least three feasible samples survive from $N=3000$ draws, three being the minimum from which the clustering step can form a recourse set. We sweep $\delta$ over $\{0.1, 0.25,0.5,1,2,3,4,6,8\}$ at $\nu=1$, then $\nu$ over $\{0,0.5,1,1.5,2,3\}$ at the chosen $\delta$, and take the minum over folds so that deployed strength succeeds on every fold. At deployment we draw $N=10{,}000$ samples per factual. The resulting operating points are given in Table~\ref{tab:lambdas}.

\paragraph{Tilt strength on MNIST.}
We study five source-to-target tasks: $8\rightarrow0$, $7\rightarrow1$, $8\rightarrow6$, $1\rightarrow4$, and $4\rightarrow1$. Images carry no immutability, monotonicity, or causal constraints, so feasibility reduces to validity: a sample is feasible if a binary source-versus-target classifier, trained per task, accepts it as the target digit. We draw $100$ factuals per task; test images of the source digit the classifier does not accept as the target, taking those with the lowest target-class probability. Each task has its own circuit, classifier and factuals, so the strength is selected per task.

Selection uses two criteria. As in the tabular benchmarks, we take the largest strength at which every tuning factual is served. That alone is too weak a requirement here: a factual can keep the three samples the rule demands while most of its samples are already invalid. So we require, in addition that mean validity stay above $80\%$. This gives $\nu=3$ for four tasks, and $\nu=4$ for $8\rightarrow0$; we report all five at $\nu=3$, at equal tilt strength. 


\begin{table}[!ht]
\centering\small
\begin{tabular}{lccc}
\toprule
 & Adult & Credit & GMSC \\
\midrule
Proximity $\delta$ & $0.6$ & $2.0$ & $2.0$ \\
Sparsity $\nu$     & $1.0$ & $1.5$ & $1.0$ \\
\bottomrule
\end{tabular}
\caption{Selected tilt strengths.}
\label{tab:lambdas}
\end{table}
\section{Sampling-Budget Ablation}
\label{app:budget}

The  configuration draws $N{=}10{,}000$ samples per individual. To check whether the returned recourse set depends on that choice, we sweep the budget over $M \in \{125, 250, \dots, 16{,}000\}$, holding the tilt, seed and filters fixed, and measure the diversity of both the feasible pool and the returned set.

Figure~\ref{fig:budget_ablation} shows that the two respond very differently. The feasible pool's diversity grows steadily with the budget: over a $128\times$ increase in $M$, the number of distinct changed-feature sets in the pool grows by $3.4\times$ on Adult, $15.2\times$ on German Credit and $2.0\times$ on GMSC. The returned set does not follow. Its $k_{\mathrm{eff}}$ is flat or slightly decreasing ($\times 0.86$, $\times 0.91$, $\times 0.84$ respectively), and Count-Diversity likewise declines ($\times 0.75$, $\times 0.81$, $\times 0.91$). Coverage saturates early: every factual is served by $M \approx 500$--$2000$ on all three datasets.
\begin{figure}[!ht]
    \centering
    \includegraphics[width=1\linewidth]{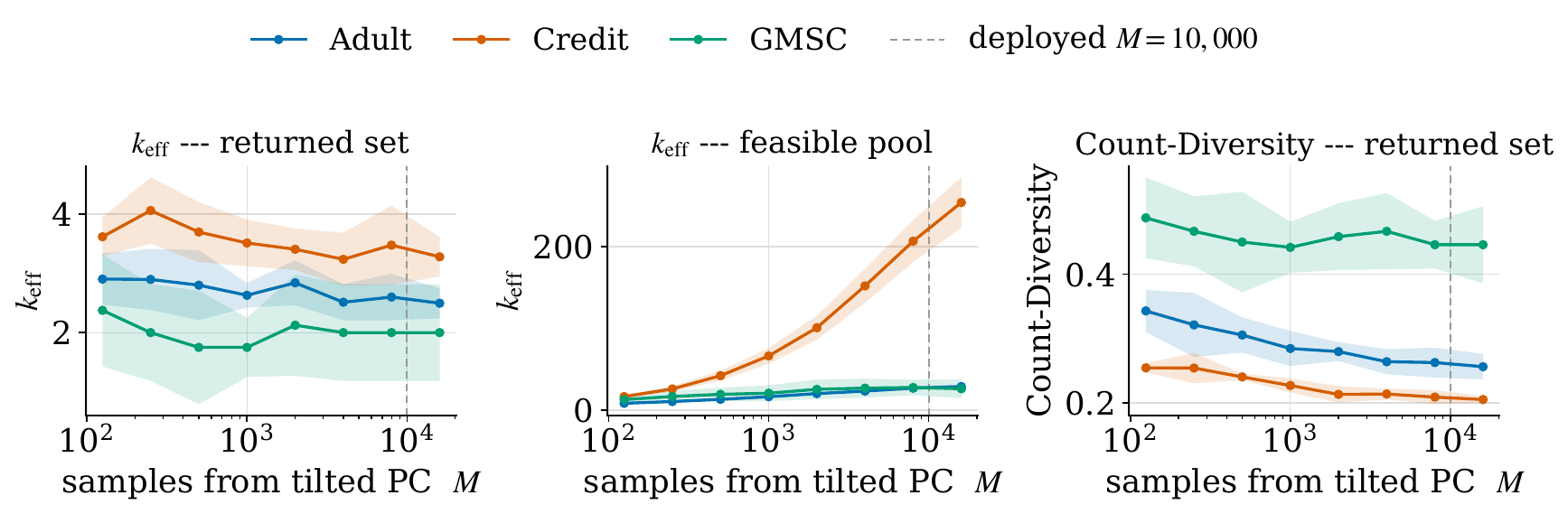}
    \caption{Sampling-budget sweep. Left: $k_{\mathrm{eff}}$ of the returned set.
Centre: $k_{\mathrm{eff}}$ of the feasible pool. Right: Count-Diversity of the
returned set. Bands are $\pm1$ standard deviation over folds; the dashed rule
marks the deployed budget. Pool diversity grows with the budget; returned
diversity does not.}
    \label{fig:budget_ablation}
\end{figure}

The reason is that $k_{\mathrm{eff}}$ is bounded above by the number of recourses returned, and the silhouette-selected $k$ does not respond to pool size; it remains within $3.7$--$4.2$ on Adult and $3.4$--$4.1$ German Credit and $2.5$--$2.8$ on GMSC throughout the sweep. Consequently, the fraction of the pool's distinct strategies that reach the user falls as the budget grows, from $34.3\%$ to $8.6\%$ on Adult, $21.7\%$ to $1.3\%$ on German Credit, and $18.3\%$ to $7.6\%$ on GMSC. The sampling budget is therefore not the binding constraint on the diversity of what we return; the number of clusters $k$ selected for the returned set is.

\section{Tilt-Strength Sensitivity}
\label{app:tilt-sensitivity}

Q4 in the main paper compares the two tilt terms at a single operating point. Here we vary each in turn. Table~\ref{tab:delta_sweep} sweeps $\delta$ at each dataset's deployed $\nu$, and Table~\ref{tab:nu_sweep} sweeps $\nu$ at its deployed $\delta$; the deployed value is marked $\dagger$ in each.

\begin{table*}[!ht]
\centering\small
\begin{minipage}[t]{0.48\textwidth}
\centering
\setlength{\tabcolsep}{5pt}
\begin{tabular}{@{}l c ccc@{}}
\toprule
Dataset & $\delta$ & Prox.\,$\downarrow$ & Spars.\,$\downarrow$ & Served\,\%\,$\uparrow$ \\
\midrule
\multirow{5}{*}{Adult}
 & 0.0 & 17.05 & 4.86 & 100 \\
 & 0.6$^{\dagger}$ & 13.40 & 4.07 & 100 \\
 & 2.0 & 10.17 & 2.71 & \phantom{0}74 \\
 & 4.0 & \phantom{0}9.73 & 2.12 & \phantom{0}58 \\
 & 8.0 & 11.38 & 1.72 & \phantom{0}35 \\
\midrule
\multirow{4}{*}{Credit}
 & 0.0 & 17.00 & 6.49 & 100 \\
 & 2.0$^{\dagger}$ & \phantom{0}9.83 & 4.01 & 100 \\
 & 4.0 & 10.50 & 4.18 & \phantom{0}96 \\
 & 8.0 & \phantom{0}9.93 & 4.14 & \phantom{0}74 \\
\midrule
\multirow{5}{*}{GMSC}
 & 0.0 & 20.40 & 7.67 & 100 \\
 & 2.0$^{\dagger}$ & 12.71 & 5.83 & 100 \\
 & 3.0 & 12.23 & 5.67 & 100 \\
 & 4.0 & 10.32 & 5.67 & 100 \\
 & 8.0 & \phantom{0}9.41 & 5.83 & 100 \\
\bottomrule
\end{tabular}
\caption{Proximity control: varying $\delta$ with $\nu$ held at its deployed
value, so the $\delta{=}0$ row is sparsity-only rather than untilted (cf.\
Figure~2, where \emph{neither} sets both strengths to zero).
$^{\dagger}$~deployed operating point.}
\label{tab:delta_sweep}
\end{minipage}\hfill
\begin{minipage}[t]{0.48\textwidth}
\centering
\setlength{\tabcolsep}{5pt}
\begin{tabular}{@{}l c ccc@{}}
\toprule
Dataset & $\nu$ & Spars.\,$\downarrow$ & Prox.\,$\downarrow$ & Served\,\%\,$\uparrow$ \\
\midrule
\multirow{5}{*}{Adult}
 & 0.0 & 4.48 & 14.61 & 100.0 \\
 & 1.0$^{\dagger}$ & 4.07 & 13.40 & 100.0 \\
 & 2.0 & 3.69 & 12.95 & \phantom{0}99.5 \\
 & 3.0 & 3.35 & 11.49 & \phantom{0}96.5 \\
 & 4.0 & 3.07 & 10.47 & \phantom{0}84.5 \\
\midrule
\multirow{5}{*}{Credit}
 & 0.0 & 5.35 & 13.93 & 100.0 \\
 & 1.5$^{\dagger}$ & 4.01 & \phantom{0}9.83 & 100.0 \\
 & 2.0 & 3.88 & \phantom{0}9.16 & \phantom{0}98.6 \\
 & 3.0 & 3.66 & \phantom{0}8.51 & \phantom{0}97.2 \\
 & 4.0 & 3.47 & \phantom{0}7.64 & \phantom{0}88.7 \\
\midrule
\multirow{5}{*}{GMSC}
 & 0.0 & 6.17 & 13.67 & 100.0 \\
 & 1.0$^{\dagger}$ & 5.83 & 12.71 & 100.0 \\
 & 2.0 & 5.83 & 11.16 & 100.0 \\
 & 3.0 & 5.50 & 11.28 & 100.0 \\
 & 4.0 & 4.83 & 11.07 & 100.0 \\
\bottomrule
\end{tabular}
\caption{Sparsity control: varying $\nu$ with $\delta$ held at its deployed
value, so the $\nu{=}0$ row is proximity-only rather than untilted.
$^{\dagger}$~deployed operating point.}
\label{tab:nu_sweep}
\end{minipage}
\end{table*}

Both tilts behave monotonically in the direction they target: increasing $\delta$ reduces proximity throughout, and increasing $\nu$ reduces sparsity throughout. Each also improves the other metric, which is the complementarity reported in Q4. The cost appears in coverage. On Adult, raising $\delta$ from its deployed $0.6$ to $4.0$ cuts distance from $13.40$ to $9.73$ but drops Served \% from $100$ to $58$; at $\delta{=}8$ only $35\%$ of factuals are served. German Credit shows the same pattern more gently, reaching $96\%$ at $\delta=4$ and $74\%$ at $\delta=8$. On GMSC coverage stays at $100\%$ even at $\delta=8$, consistent with its higher feasible mass, and the cost surfaces instead as plausibility: the best-member NLL rises from $5.46$ at $\delta=0$ to $12.67$ at $\delta=8$. The selection rule of Section~\ref{app:hparams} takes the largest strength preserving full coverage on the tuning split, where the collapse boundary is reached. The denied subtest individuals on GMSC happen to lie well inside it.


\section{Tilted Input Distributions}
\label{app:leaves}

Corollary 1 requires that each input node's tilted distribution and its normalizer be computable exactly. Two facts make this mild. First, for any non-negative cost the tilt factor satisfies $e^{-g_j}\le 1$, so
\[
0 \;<\; Z_n \;=\; \int f_n(x_j)\,e^{-g_j(x_j;x_j^-)}\,dx_j \;\le\; 1
\]
at every input node: the tilted circuit always exists, whatever the feature type. Second, $Z_n$ is available in closed form for the standard input families. For a Gaussian input $\mathcal{N}(\mu,\sigma^2)$ and the $\ell_1$ proximity cost, splitting the integral at $c=x_j^-$ and completing the square gives
\[
Z_n = e^{\delta^2\sigma^2/2}\Big[
 e^{\delta(\mu-c)}\Phi\big(\tfrac{c-\mu-\delta\sigma^2}{\sigma}\big)
+e^{\delta(c-\mu)}\Phi\big(\tfrac{\mu-\delta\sigma^2-c}{\sigma}\big)\Big],
\]
a tilted input is a two-piece truncated Gaussian with means $\mu\mp\delta\sigma^2$ either side of $c$. The same construction applies to Gamma, exponential, Poisson and uniform inputs; more generally it holds whenever the input family's moment generating function and CDF are available in closed form, since $|x_j-x_j^-|$ is piecewise linear and the tilt reduces to a natural-parameter shift on either side of the factual.

 Under a continuous input, $\Pr(x_j = x_j^-)=0$, so $\mathbb{1}[x_j\neq x_j^-]=1$ almost surely and $e^{-\nu\mathbb{1}[\cdot]}$ contributes a constant that cancels in the normalization: only the proximity tilt acts.

\section{Datasets, Constraints and, Implementation} 
\label{app:setup}

We use the LiCE benchmark's constraint specifications unmodified. Adult declares one causal implication (education $\Rightarrow$ age), German Credit two (residence $\Rightarrow$ age, employment $\Rightarrow$ age), and GMSC none; the Causal \% column of Table~2 is therefore omitted for GMSC. Immutable and monotone attributes follow the benchmark's definitions.

\paragraph{Computing infrastructure.} All experiments were run on a single Apple MacBook Air (M3, 16\,GB RAM) under macOS~14.5, on CPU only; no GPU was
used. The implementation uses Python~3.9.6 with NumPy~2.0.2, pandas~2.3.3, SciPy~1.13.1, scikit-learn~1.6.1 and PyTorch~2.8.0 (CPU build). Probabilistic circuits are learned and evaluated with SPFlow, and the classifiers, data splits and constraint specifications are taken unmodified from the LiCE benchmark. SPFlow is included in our code release; the LiCE benchmark is not, as its repository carries no licence permitting redistribution. Generating recourse for one denied individual takes under one second at a sampling budget of $N=10,000$ on this hardware. All runs use seed~0.

\section{Filter Attribution}
\label{app:filters}

Actionability is enforced by construction, so the only samples lost are those failing classifier validity or the causal constraints. Table~\ref{tab:filters} attributes the loss to each stage.

Validity accounts for almost all of it: $82.6\%$ of discarded samples on Adult, $98.0\%$ on German Credit, and all of it on GMSC, which declares no causal implications. Among samples that pass validity, $56.5\%$ on Adult and $89.3\%$ on German Credit also satisfy the causal constraints.

The resulting pools are large. Median feasible counts per individual are $1351$ on Adult, $1262$ on German Credit and $8363$ on GMSC, against the three samples the clustering step requires.

\begin{table}[!ht]\centering\small
\begin{tabular}{@{}lccccc@{}}
\toprule
 & & & \multicolumn{2}{c}{loss attributed to} & median \\
\cmidrule(lr){4-5}
Dataset & Valid \% & Feasible \% & validity & causal & feasible \\
\midrule
Adult         & 32.7 & 18.5 & 82.6\% & 17.4\% & 1351 \\
German Credit & 19.7 & 18.1 & 98.0\% & 2.0\%  & 1262 \\
GMSC          & 71.1 & 71.1 & 100\%  & ---    & 8363 \\
\bottomrule
\end{tabular}
\caption{Per-factual acceptance rates and attribution of rejected samples.}
\label{tab:filters}
\end{table}
\section{Additional MNIST Results}  
\label{app:mnist}
Q5 in the main paper shows two of the five source-to-target tasks. The remaining three are reported here in the same format. Figure~\ref{fig:mnist_supp} shows samples and their pixel differences from the factual at $\nu\in\{0,3,7\}$, and Table~\ref{tab:mnist_supp} gives recourse quality at the deployed $\nu=3$. The behavior matches the two main tasks.

\label{app:mnist}
\begin{figure}[!ht]
    \centering
    \includegraphics[width=1\linewidth]{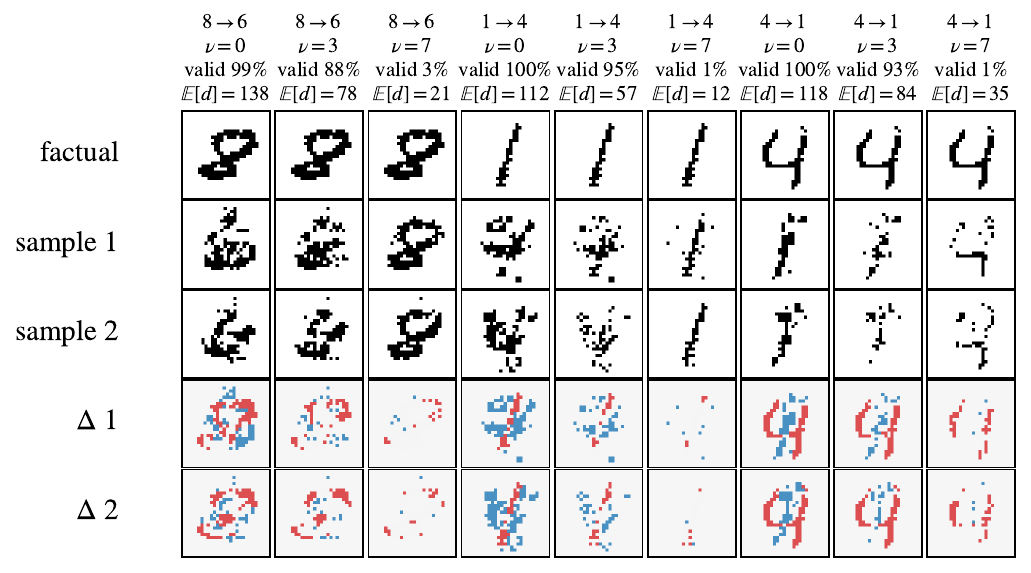}
    \caption{The tilt on MNIST. Each row tilts a target-digit circuit toward a factual source-digit image (left) at strength $\nu$; \emph{sample 1 - 2} are independent draws from the tilted circuit and $\Delta 1$--$2$ their pixel differences from the factual (red: present in the factual and removed by the sample; blue: added by the sample).}
    \label{fig:mnist_supp}
\end{figure}
\begin{table}[!ht]
\centering\small
\setlength{\tabcolsep}{4pt}
\begin{tabular}{@{}l c ccc cc@{}}
\toprule
& & \multicolumn{3}{c}{NLL\,$\downarrow$} & & \\
\cmidrule(lr){3-5}
Task & Valid\,\%\,$\uparrow$ & best & mean & worst
& Spars.\,(px)\,$\downarrow$ & Count-Div\,$\uparrow$ \\
\midrule
$8\rightarrow6$ & 83.3 & 110.5 & 160.7 & 222.2 & 80.5 & 0.104 \\
$1\rightarrow4$ & 92.0 & 90.8 & 144.4 & 223.2 & 60.7 & 0.095 \\
$4\rightarrow1$ & 88.0 & 50.8 & 94.6 & 152.2 & 90.2 & 0.063 \\
\bottomrule
\end{tabular}
\caption{MNIST recourse quality at the operating point $\nu{=}3$
(all factuals served).}
\label{tab:mnist_supp}
\end{table}

\section{Proximity and Sparsity Comparison}
\label{app:prox_spars}

Proximity and sparsity are not objectives TRD optimizes directly: the tilt reweights the recourse distribution so that closer, sparser candidates carry more probability mass. LiCE and MIO minimize the distance explicitly within a mixed-integer program, and Table~\ref{tab:prox_spars} shows the expected consequence: they attain the lowest proximity and sparsity on all three datasets, with TRD between them and DiCE.
\begin{table*}[!ht]\centering\small
\setlength{\tabcolsep}{4pt}
\begin{tabular}{@{}l cccc cccc cccc@{}}
\toprule
& \multicolumn{4}{c}{Adult} & \multicolumn{4}{c}{German Credit}
& \multicolumn{4}{c@{}}{GMSC} \\
\cmidrule(lr){2-5}\cmidrule(lr){6-9}\cmidrule(l){10-13}
& \multicolumn{2}{c}{best member} & \multicolumn{2}{c}{whole set}
& \multicolumn{2}{c}{best member} & \multicolumn{2}{c}{whole set}
& \multicolumn{2}{c}{best member} & \multicolumn{2}{c@{}}{whole set} \\
\cmidrule(lr){2-3}\cmidrule(lr){4-5}\cmidrule(lr){6-7}\cmidrule(lr){8-9}\cmidrule(lr){10-11}\cmidrule(l){12-13}
Method & Prox. & Spars. & Prox. & Spars.
       & Prox. & Spars. & Prox. & Spars.
       & Prox. & Spars. & Prox. & Spars. \\
\midrule
DiCE & 28.25 & 4.11 & 37.87 & 4.79
     & 14.19 & 5.00 & 17.08 & 6.00
     & 14.78 & 6.75 & 13.99 & 6.08 \\
LiCE & 10.36 & 3.22 & 10.36 & 3.22
     & \phantom{0}2.69 & \textbf{1.50} & \textbf{\phantom{0}2.68} & \textbf{1.52}
     & \phantom{0}6.06 & 4.12 & \phantom{0}6.06 & 4.12 \\
MIO  & \textbf{\phantom{0}6.46} & \textbf{2.50} & \textbf{\phantom{0}6.49} & \textbf{2.75}
     & \textbf{\phantom{0}2.68} & 1.91 & \phantom{0}2.70 & 2.60
     & \textbf{\phantom{0}3.61} & \textbf{2.88} & \textbf{\phantom{0}3.57} & \textbf{2.98} \\
\textbf{TRD} & 13.58 & 4.09 & 14.50 & 4.08
     & \phantom{0}9.70 & 4.03 & \phantom{0}9.19 & 3.77
     & 13.77 & 5.62 & 12.16 & 5.81 \\
\bottomrule
\end{tabular}
\caption{Proximity (MAD-weighted $\ell_1$, $\downarrow$) and sparsity
($\downarrow$) of returned recourse sets; means over folds, on each method's
feasible subset. \emph{Best member} is the min-NLL counterfactual;
\emph{whole set} averages over all returned. LiCE is the median variant.
\textbf{Bold}: best per column within each dataset.}
\label{tab:prox_spars}
\end{table*}

The columns understate TRD in two respects. First, among the methods that target diversity, TRD improves on DiCE in every column and on every dataset. Second, the averages are taken over sets of different composition: LiCE  returns ten counterfactuals that collapse to a single distinct strategy (cf.\ Table~1), so its whole-set columns are averages over near-identical points and coincide with its best-member values, whereas DiCE and TRD average over several distinct strategies each. TRD therefore trades some proximity for the plausibility and diversity reported in Table~1, where it attains the lowest best and mean NLL on all three datasets.

\end{document}